\newtheorem{theorem}{Theorem}[section]
\newtheorem{proposition}[theorem]{Proposition}
\newenvironment{definition}[1][Definition]{\begin{trivlist}
\item[\hskip \labelsep {\bfseries #1}]}{\end{trivlist}}
\icmltitlerunning{Structured Prediction by Conditional Risk Minimization}
\begin{document} 

\twocolumn[
\icmltitle{Structured Prediction by Conditional Risk Minimization}

\icmlauthor{Chong Yang Goh{\normalfont, MIT}}{cygoh@mit.edu}
\icmlauthor{Patrick Jaillet{\normalfont, MIT}}{jaillet@mit.edu}

\icmlkeywords{structured prediction, combinatorial optimization, kernel methods}

\vskip 0.3in
]

\begin{abstract} 
    We propose a general approach for supervised learning with structured output spaces, such as combinatorial and polyhedral sets, that is based on minimizing estimated conditional risk functions. Given a loss function defined over pairs of output labels, we first estimate the conditional risk function by solving a (possibly infinite) collection of regularized least squares problems. A prediction is made by solving an inference problem that minimizes the estimated conditional risk function over the output space. We show that this approach enables, in some cases, efficient training and inference without explicitly introducing a convex surrogate for the original loss function, even when it is discontinuous. Empirical evaluations on real-world and synthetic data sets demonstrate the effectiveness of our method in adapting to a variety of loss functions.
\end{abstract} 

\section{Introduction}
Many important tasks in machine learning involve predicting output labels that must satisfy certain joint constraints. These constraints help restrict the search space and incorporate domain knowledge about the task at hand. For example, in hierarchical multi-label classification, the goal is to predict, given an input, a set of labels that satisfy hierarchical constraints imposed by a known taxonomy. More generally, the output space can also depend on the observed inputs, which is commonly the case in language, vision and speech applications. Collectively, these problems are broadly referred to as \emph{structured prediction}. 

Formally, the goal of structured prediction is to learn a predictor $h:\mathcal{X}\mapsto\mathcal{Y}$ that maps an input $x\in\mathcal{X}$ to some $y$ in a structured output space $\mathcal{Y}$. Unlike the classical problems of binary classification ($\mathcal{Y}=\{-1,1\}$) and regression ($\mathcal{Y}=\mathbb{R}$), a defining challenge in structured prediction is that $\mathcal{Y}$ can be a highly complex space, usually representing combinatorial structures like trees, matchings, and vertex label assignments on graphs, or any arbitrary set of vectors in a real vector space. The difficulty of the task naturally depends on the geometry of $\mathcal{Y}$ and the choice of loss function $\ell:\mathcal{Y}\times\mathcal{Y}\mapsto\mathbb{R}_+$, which specifies the penalty associated with a pair of predicted and realized outputs.

Most existing structured prediction methods can be understood as learning a scoring function $F(y,x)$, which assigns to each $y\in\mathcal{Y}$ its compatibility score with some input $x$. A prediction is made by solving an \emph{inference problem} that finds some $y^*\in\arg\max_{y\in\mathcal{Y}} F(y,x)$. Conditional random fields (CRFs) \cite{lafferty01}, Max-Margin Markov Networks (M$^3$N) \cite{taskar03} and Structured Support Vector Machines (SSVM) \cite{tsochantaridis05} are three commonly used methods that fit into this framework. CRFs directly model the conditional distribution $F(y,x):=p_{Y|X}(y|x)$ with a graphical model, thus maximizing the compatibility score is equivalent to maximum a posteriori estimation. In M$^3$N and SSVM, $F(y,x)$ is expressed in a linear (or log linear) form $\langle w,\phi(x,y) \rangle$, where $\phi(x,y)$ is a joint feature representation of the input-output pair. Here, the weight vector $w$ is found by solving a max-margin problem that maximizes separation of the true labels from others by their score differences.

A central question that concerns the training of these structured prediction models is how to adapt them to a chosen loss function $\ell$, which reflects how the model's performance is to be measured. This is usually done by empirical risk minimization (ERM). However, minimizing the empirical risk exactly is computationally nontrivial even for simple tasks, such as predicting binary labels under the zero-one loss \cite{feldman2012}, due to discontinuities of the objective function. In M$^3$N and SSVM, this issue is addressed by substituting the original loss with a convex surrogate that is more amenable to computation. In CRFs, several works have also sought to incorporate loss functions into the training of probabilistic models \cite{gross06,volkovs11}; likewise, they require relaxing the original loss to be tractable. One known issue of minimizing a convex surrogate is that it can result in suboptimal performance in terms of the actual loss, due to looseness of the upper bound \cite{chapelle09}. Nevertheless, if we work backwards by considering what constitutes an ideal learning outcome, the notion of Bayes optimality succinctly characterizes the goal: we seek a predictor $h^*$ that maps each input $x$ to an output $y^*$ minimizing the \emph{conditional risk function} $R(y|x):=\mathbb{E}_{Y|X}[\ell(y,Y)|X=x]$. Motivated by this perspective, we explore two questions in this paper:
\begin{enumerate}
\item \emph{Can a structured prediction framework be developed based on discriminative modeling of the conditional risk function $R(\cdot|\cdot)$ and its subsequent minimization?}
\item \emph{What are the computational and statistical properties of such algorithms?}
\end{enumerate}

To answer the first question, we propose a framework for structured prediction that is based on \emph{estimated conditional risk minimization} (ECRM), which can also be characterized as one that learns a scoring function $F(y,x)$. Specifically, the scoring function (to be minimized over) can be interpreted as a direct estimate of the conditional risk function, $F(y,x)\approx R(y|x)$. We derive a closed form expression for $F$ under a rich nonparametric modeling of the conditional risk function, obtained by solving a collection of regularized least squares problems (Section 2).

To answer the second question, we apply the method to a class of structured prediction tasks with combinatorial outputs and additive loss functions, and derive sufficient conditions under which the inference problem can be solved exactly and efficiently (Section 3). We complement these computational results with a statistical analysis of the algorithm, providing a generalization bound that holds under fairly general settings (Section 4).

Finally, we discuss some extensions with additive models and joint feature maps (Section 5). We then evaluate the ECRM approach on two tasks, hierarchical multilabel classification (discrete outputs) and prediction of flows in a network (continuous outputs), with real-world and synthetic data sets. The experimental results demonstrate how the algorithm adapts to a variety of loss functions and compare favorably with existing methods (Section 6).
\section{Estimated Conditional Risk Minimization}
\subsection{Preliminaries}
Throughout this paper, we represent $\mathcal{X}\subseteq\mathbb{R}^p$ and $\mathcal{Y}\subseteq\mathbb{R}^d$ as sets of vectors in a real vector space. Depending on the application, $\mathcal{Y}$ can be discrete (e.g., a combinatorial subset of $\{0,1\}^d$) or otherwise (e.g. a polytope). As in most supervised learning settings, we have a training set, $S=[(x^{(i)},y^{(i)})\in\mathcal{X}\times\mathcal{Y}:i=1,\ldots,m]$, where the samples are i.i.d. and drawn from some fixed distribution $\mathbb{P}_{X,Y}$. We define $R(y|x):=\mathbb{E}_{Y|X}[\ell(y,Y)|X=x]$ to be the conditional risk of predicting label $y$ having observed an input $x$, and use $\hat{R}$ to denote its estimate. Here the expectation is defined with respect to the conditional distribution $\mathbb{P}_{Y|X}$.

In summary, the proposed method can be described in two steps:
\begin{enumerate}
\item \textbf{Training}: Learn a set of functions indexed by $y$, $\{\hat{R}(y|\cdot):y\in\mathcal{Y}\}$ by solving a (possibly infinite) collection of regularized least squares problems.
\item \textbf{Prediction}: Given an input $x$, predict $h(x)=y^*$ by solving an inference problem,
\begin{equation}
y*\in\arg\min_{y\in\mathcal{Y}}\hat{R}(y|x).
\end{equation}
\end{enumerate}
\subsection{Training}
We begin with two basic observations that hold under weak regularity assumptions:\footnote{See, e.g., \cite{lehmann98}.} for any fixed $y\in\mathcal{Y}$,
\begin{enumerate}
\item The random loss function $\ell(y,Y)$ can be written as $\ell(y,Y)=R(y|X)+\varepsilon(y,X)$, where $\varepsilon(y,X)$ is a zero-mean random variable that depends (randomly) on $X$ and (deterministically) on $y$.
\item $R(y|\cdot)\in\arg\min_{f:\mathcal{X}\mapsto\mathbb{R}}\mathbb{E}[(f(X)-\ell(y,Y))^2]$.
\end{enumerate}
This decomposition suggests a regression approach to estimating $R(y|\cdot)$: for every $y\in\mathcal{Y}$, we posit that $R(y|\cdot)$ lies in some function space $\mathcal{H}\subseteq\{f:\mathcal{X}\mapsto\mathbb{R}\}$ and estimate it by solving a regularized least squares problem,\footnote{To simplify exposition, we omit modeling with intercept here and discuss how it can be accounted for in the Appendix.}
\begin{equation} \label{eq:rls}
\hat{R}(y|\cdot)\in\arg\min_{f\in\mathcal{H}}\dfrac{1}{m}\sum_{i=1}^m (f(x^{(i)})-\ell(y,y^{(i)}))^2 + \lambda\|f\|^2_{\mathcal{H}},
\end{equation}
where $\|\cdot\|_\mathcal{H}$ is a norm over $\mathcal{H}$ and $\lambda$ is a regularization parameter. To model a rich nonparametric class of conditional risk functions, we define $\mathcal{H}$ as a reproducing kernel Hilbert space (RKHS) spanned by a real, symmetric positive definite kernel $k:\mathcal{X}\times\mathcal{X}\mapsto\mathbb{R}$. This reduces the problem to kernel ridge regression (KRR) \cite{saunders98}, which admits the following closed form solution.
\begin{proposition}\label{pro:rls}
For every $y\in\mathcal{Y}$, an optimal solution for the regularized least squares problem can be expressed as
\begin{equation}\label{eq:rlssol}
\hat{R}(y|\cdot)=\sum_{i=1}^m w_i(\cdot)\ell(y,y^{(i)}),
\end{equation}
where $w(x):=(K+m\lambda I)^{-1}v(x)$ is a $m\times 1$ weight vector, $K:=[k(x^{(i)},x^{(j)})]_{ij},\forall i,j\in\{1,\ldots,m\}$ is a $m\times m$ gram matrix and $v(x):=[k(x,x^{(i)})]_{i=1}^m$ is a $m\times 1$ vector.
\end{proposition}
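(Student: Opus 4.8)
The plan is to recognize \eqref{eq:rls}, for each fixed $y\in\mathcal{Y}$, as a kernel ridge regression problem and to attack it in two stages: first collapse the optimization over the (possibly infinite-dimensional) RKHS $\mathcal{H}$ onto a finite-dimensional subspace via the representer theorem, then solve the resulting quadratic problem in closed form.

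First I would fix $y$ and write the response vector as $\ell_y := [\ell(y,y^{(i)})]_{i=1}^m\in\mathbb{R}^m$. Since $\lambda>0$, the objective in \eqref{eq:rls} is strongly convex and coercive on $\mathcal{H}$, hence it admits a unique minimizer $\hat{R}(y|\cdot)$. To locate it, I would decompose $\mathcal{H}=\mathcal{H}_0\oplus\mathcal{H}_0^\perp$ with $\mathcal{H}_0:=\mathrm{span}\{k(\cdot,x^{(i)}):i=1,\ldots,m\}$, and for any $f=f_0+f_\perp$ use the reproducing property $f(x^{(i)})=\langle f,k(\cdot,x^{(i)})\rangle_{\mathcal{H}}=f_0(x^{(i)})$ to see that the data-fitting term depends on $f_0$ alone, whereas $\|f\|_{\mathcal{H}}^2=\|f_0\|_{\mathcal{H}}^2+\|f_\perp\|_{\mathcal{H}}^2\ge\|f_0\|_{\mathcal{H}}^2$. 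Therefore any minimizer has $f_\perp=0$, so $\hat{R}(y|\cdot)=\sum_{j=1}^m\alpha_j k(\cdot,x^{(j)})$ for some $\alpha\in\mathbb{R}^m$.

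Next I would substitute this parametrization into \eqref{eq:rls}. Using $\hat{R}(y|x^{(i)})=(K\alpha)_i$ and, by the reproducing property and symmetry of $k$, $\|\hat{R}(y|\cdot)\|_{\mathcal{H}}^2=\alpha^\top K\alpha$, the objective becomes the convex quadratic $\tfrac{1}{m}\|K\alpha-\ell_y\|_2^2+\lambda\,\alpha^\top K\alpha$ in $\alpha$. Setting its gradient to zero yields $K\big((K+m\lambda I)\alpha-\ell_y\big)=0$, and $\alpha^\star:=(K+m\lambda I)^{-1}\ell_y$ is a solution; the inverse exists because $K\succeq 0$ (the kernel is symmetric positive definite) and $m\lambda>0$, so $K+m\lambda I\succ 0$. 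By convexity, $\alpha^\star$ is a global minimizer. Evaluating at an arbitrary $x$ and using symmetry of $(K+m\lambda I)^{-1}$ then gives $\hat{R}(y|x)=v(x)^\top\alpha^\star=v(x)^\top(K+m\lambda I)^{-1}\ell_y=w(x)^\top\ell_y=\sum_{i=1}^m w_i(x)\,\ell(y,y^{(i)})$, which is precisely \eqref{eq:rlssol}.

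I expect the only substantive step to be the representer-theorem reduction onto $\mathcal{H}_0$; everything after it is a routine finite-dimensional least-squares computation. A minor point to keep straight is that when $K$ is singular the coefficient vector $\alpha$ need not be unique --- distinct $\alpha$ can represent the same element of $\mathcal{H}_0$ --- but $\alpha^\star$ is one valid choice satisfying the stationarity condition, and the induced function $\hat{R}(y|\cdot)$ is still the unique minimizer over $\mathcal{H}$. One could alternatively bypass the representer theorem by verifying directly that $\sum_i w_i(\cdot)\,\ell(y,y^{(i)})$ meets the first-order optimality condition for \eqref{eq:rls} in $\mathcal{H}$, but the orthogonal-decomposition argument is cleaner and self-contained.
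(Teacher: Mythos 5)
Your proposal is correct and follows essentially the same route as the paper's proof: invoke the representer theorem to restrict to $\mathrm{span}\{k(\cdot,x^{(i)})\}$, reduce to the quadratic $\tfrac{1}{m}\|K\alpha-L_y\|_2^2+\lambda\alpha^\top K\alpha$, and solve the first-order conditions to obtain $\alpha^\star=(K+m\lambda I)^{-1}L_y$. You are in fact slightly more careful than the paper on two points --- you prove the orthogonal-decomposition step rather than only citing it, and you note that when $K$ is singular the stationarity condition $K\bigl((K+m\lambda I)\alpha-L_y\bigr)=0$ does not force $\alpha$ to be unique (the paper cancels $K$ without comment) --- but these are refinements of the same argument, not a different one.
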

\begin{proof}
For a fixed $y$, the \emph{representer's theorem} \cite{scholkopf01} implies that any optimal solution of the problem (\ref{eq:rls}) lies in the span of $\{k(\cdot,x^{(i)})\}_{i=1}^m$, i.e., there exists some $\alpha\in\mathbb{R}^m$ such that $\hat{R}(y|\cdot)=\sum_{i=1}^m \alpha_i k(\cdot,x^{(i)})$. This also implies that $\|\hat{R}(y|\cdot)\|^2_\mathcal{H}=\alpha^T K \alpha$, where $K$ is defined above. Thus it suffices to consider an equivalent optimization problem over $\alpha$,
\begin{equation*}\label{eq:rlsalpha_rss}
\min_{\alpha\in\mathbb{R}^m}\dfrac{1}{m}\sum_{i=1}^m \left(\sum_{j=1}^m \alpha_j k(x^{(i)},x^{(j)}) -\ell(y,y^{(i)})\right)^2 + \lambda\alpha^T K \alpha.
\end{equation*}
Denoting $L_y:=[\ell(y,y^{(1)})\;\cdots\;\ell(y,y^{(m)})]$ as the vector of observed losses, this can be written compactly as
\begin{equation}\label{eq:rlsalpha_matrix}
\min_{\alpha\in\mathbb{R}^m} \dfrac{1}{m}\| K\alpha - L_y\|_2^2 + \lambda\alpha^T K \alpha.
\end{equation}
Since the problem is convex (which follows from $k$ being a symmetric, positive definite kernel), the first order optimality conditions are necessary and sufficient. Taking the derivative of the objective function and equating it to zero, we have
\begin{gather*}
2K^T K\alpha^*_y - 2KL_y + 2m\lambda K\alpha^*_y = 0 \\
\alpha^*_y = (K+m\lambda I)^{-1} L_y.
\end{gather*}
For any $x\in\mathcal{X}$, we have $\hat{R}(y|x)=\sum_{i=1}^m \alpha^*_{y,i} k(x,x^{(i)})=\alpha^{*T}_y v(x)$. Substituting $\alpha^*_y$ with the above solution, we obtain $L_y^T(K+m\lambda I)^{-1} v(x)=\sum_{i=1}^m w_i(x) \ell(y,y^{(i)})$, thus proving our claim.
\end{proof}
Because the above holds true for all $y\in\mathcal{Y}$, we have a complete characterization of $\hat{R}(\cdot|x)$ as the weighted sum of the individual loss functions $\{\ell(\cdot,y^{(i)})\}_{i=1}^m$ induced by $y^{(1)},\ldots,y^{(m)}$, with weights $w(x)$ that only depend on $x$ and $x^{(1)},\ldots,x^{(m)}$.

So far, our derivation of $\hat{R}(\cdot|\cdot)$ has not relied on any particular assumption about the output space $\mathcal{Y}$ and the loss function $\ell$, which makes it broadly applicable. But certain problems admit structures that presumably can be exploited for better generalization performance: for example, the loss function may be additive over substructures of $\mathcal{Y}$, in which case modeling the conditional risk function additively can be useful for incorporating additional domain knowledge about the task. We defer the discussion of these extensions to Section \ref{sec:extension}.
\subsection{Prediction}
Given an input $x$, we first compute the weight vector $w(x)=(K+m\lambda I)^{-1}v(x)$. This is done by first forming $v(x)\in\mathbb{R}^m$ and then either solving a linear system, or multiplying $v(x)$ with the inverted matrix if already pre-computed in training.\footnote{In large-scale problems, we can use a low-rank approximation of $K$ to reduce storage and computational requirements. See, e.g., \cite{kumar09,si14}.}

A prediction $y^*=\hat{h}(x)$ is computed by solving an auxiliary optimization problem that minimizes the estimated conditional risk $\hat{R}(y|x)$,
\begin{equation}
y^*\in\arg\min_{y\in\mathcal{Y}}\sum_{i=1}^m w_i(x) \ell(y,y^{(i)}).\label{eq:auxprob}
\end{equation}
The difficulty of this problem crucially depends on the geometry of $\mathcal{Y}$, the choice of $\ell$ and in some cases the signs of $w(x)$. In Section 3, we will characterize a class of problems for which this inference problem can solved efficiently. In some cases, this can be done even if $\ell$ is discontinuous, without requiring the use of surrogate loss.
\subsection{Related Work}
\subsubsection{Regularized Least Squares Classification}
We show that applying ECRM in binary classification with zero-one loss is equivalent to Regularized Least Squares Classification (RLSC)\footnote{RLSC is closely related to Least Squares SVM \cite{suykens99}, which differs only in that it includes an unpenalized intercept.} \cite{rifkin03}. Thus ECRM can be viewed as a generalization of RLSC to structured output spaces. With $\mathcal{Y}=\{-1,1\}$ and $\ell(y,y')=\mathds{1}(y\neq y')$, the conditional risk $R(y|x)$ is simply the conditional probability of misclassification, $\mathbb{P}(y\neq Y|X=x)$. Applying (\ref{eq:auxprob}), we obtain the following ECRM classification rule,
\begin{equation}
\hat{h}(x)=1\text{ iff }\sum_{i=1}^m w_i(x)y^{(i)} \geq 0. \label{eq:ecrmbinary}
\end{equation}
In RLSC, we characterize the hypothesis space in the form of $h(x)=\text{sgn}(f(x))$ for some function $f:\mathcal{X}\mapsto\mathbb{R}$, and estimate an $f^*$ by solving a regularized least squares problem:
$f^*(\cdot)\in\arg\min_{f\in\mathcal{H}}\frac{1}{m}\sum_{i=1}^m (f(x^{(i)})-y^{(i)})^2 + \lambda\|f\|^2_{\mathcal{H}}.$ Here $\mathcal{H}$ is a RKHS associated with a kernel $k$. By the representer's theorem, the solution can be expressed in closed form as $f^*(x)=\sum_{i=1}^m w_i(x)y^{(i)}$, where $w(x)$ is defined as in Proposition \ref{pro:rls}. The resulting classification rule $h(x)=\text{sgn}(f^*(x))$ is exactly the same as (\ref{eq:ecrmbinary}), thus proving the equivalence.

\subsubsection{KRR-based Methods}
Kernel Dependency Estimation (KDE) \cite{weston03} and its extensions \cite{cortes05} are also structured prediction methods based on KRR. However, KDE differs from our approach in that regression is used to learn a direct mapping from inputs to a feature space associated with the output space $\mathcal{Y}$, whose output is then mapped from the feature space back to $\mathcal{Y}$ by solving a \emph{pre-image} problem. Adapting this method to a loss function requires the output feature space to be represented by a properly crafted kernel. Except in a few special cases, this kernel representation results in a pre-image problem that is hard to solve \cite{giguere15}.

Recently, Ciliberto et. al. \yrcite{ciliberto16} formulated a generalization of KDE by showing that a broad class of loss functions naturally induce an embedding of structured outputs in a feature space, in which the loss can be expressed as an inner product form. They showed that the training problem also reduces to KRR, but the inference problem can be solved without explicitly computing an inverse mapping (thus avoiding the pre-image problem of KDE). Our method can be seen as a novel, conceptually simpler derivation of some of these results through the lens of conditional risk minimization. In addition, our method can be generalized to additive models with joint kernels (Section \ref{sec:extension}).

\section{Computational Properties}
\label{sec:computational}
To characterize problems for which efficient inference in (\ref{eq:auxprob}) is possible, we focus on $\mathcal{Y}\subseteq\{0,1\}^d$ and any loss function $\ell(y,y')$ that can be expressed in an additive form, $\ell(y,y'):=\sum_{j=1}^d \ell_j(y_j,y')$, where $\ell_j:\{0,1\}\times\mathcal{Y}\mapsto\mathbb{R}_+$. We define $\mathcal{Y}$ to be a set of points in $\{0,1\}^d$ that satisfy the following linear constraints,
\begin{equation} \label{eq:linconstraints}
  \begin{aligned}
    & & a_{i_1}^T y \leq b_{i_1},\forall i_1\in I_1, \\
    & & a_{i_2}^T y \geq b_{i_2},\forall i_2\in I_2, \\
    & & a_{i_3}^T y = b_{i_3},\forall i_3\in I_3, \\
  \end{aligned}
\end{equation}
where $I_1,I_2,I_3$ are (possibly empty) disjoint sets of indices such that their union is $\{1,\ldots,n\}$, and $a_i\in\mathbb{R}^d,b_i\in\mathbb{R},\forall i$. In other words, there are a total of $n$ linear constraints, each can either be an inequality or an equality. This characterization of $\mathcal{Y}$ is fairly general as many objects of interest, including matchings, permutations and label assignments in graphs, can be represented as such. For convenience, we will define $A:=[a_1 \ldots a_n]^T$ to be the \emph{constraint matrix}, $b:=[b_1 \ldots b_n]^T$, and $\mathcal{Z}\subseteq\mathbb{R}^d$ as the polyhedron characterized by these linear constraints. Under this loss and our definition of $\mathcal{Y}$, the inference problem in (\ref{eq:auxprob}) is a discrete optimization problem,
\begin{equation}\label{eq:inference_ip}
  \begin{aligned}
    & \text{minimize }
    & & \sum_{i=1}^m\left(\sum_{j=1}^d \ell_{j}(y_j,y^{(i)})\right) w_i(x) \\
    & \text{subject to }
    & & y \in \{0,1\}^d\cap \mathcal{Z}
  \end{aligned}
\end{equation}

This problem is $\mathcal{NP}$-complete in general. However, we will show that an interesting subclass can be solved \emph{exactly} and efficiently by linear programming relaxation, obtained by relaxing constraints $y\in\{0,1\}^d$ to $0\leq y\leq 1$, and rewriting the objective as a linear function of $y$. One property of matrices that is useful for this purpose is \emph{total unimodularity}, defined as follows.
\begin{definition} (Total Unimodularity) A matrix $A$ is totally unimodular if every square submatrix of $A$ has determinant $0,-1$ or $1$.
\end{definition}

The connection between total unimodularity and exactness of linear programming relaxation is well known \cite{schrijver98}. Applying it to (\ref{eq:inference_ip}), we obtain the following result.

\begin{theorem}\label{thm:lprelax}
If $A$ is totally unimodular and $b\in\mathbb{Z}^n$, then for any $w(x)\in\mathbb{R}^m$, an optimal solution of the inference problem can be found by solving a linear program,
\begin{equation*}
  \begin{aligned}
    & \textnormal{minimize}
    & & \sum_{j=1}^d\left(\sum_{i=1}^m (\ell_j(1,y^{(i)})-\ell_j(0,y^{(i)})) w_i(x)\right) y_j \\
    & \textnormal{subject to }
    & & 0 \leq y_j \leq 1, \forall j=1,\ldots,d\\
    &&& y\in \mathcal{Z} \\
  \end{aligned}
\end{equation*}
\end{theorem}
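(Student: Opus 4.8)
The plan is to show that the integer program \eqref{eq:inference_ip} and the proposed linear program have the same optimal value and that the LP admits an integral optimal solution, so that solving the LP suffices. First I would rewrite the objective of \eqref{eq:inference_ip} as a linear function of $y$. For each coordinate $j$ and each sample $i$, since $y_j \in \{0,1\}$ we have the identity $\ell_j(y_j, y^{(i)}) = \ell_j(0, y^{(i)}) + \bigl(\ell_j(1,y^{(i)}) - \ell_j(0,y^{(i)})\bigr) y_j$. Summing over $j$ and $i$ with weights $w_i(x)$, the objective becomes $\sum_{j=1}^d c_j y_j + C$, where $c_j := \sum_{i=1}^m \bigl(\ell_j(1,y^{(i)}) - \ell_j(0,y^{(i)})\bigr) w_i(x)$ and $C := \sum_{i=1}^m \sum_{j=1}^d \ell_j(0,y^{(i)}) w_i(x)$ is a constant independent of $y$. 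Dropping the constant $C$ does not change the set of minimizers, so \eqref{eq:inference_ip} is equivalent to minimizing $\sum_j c_j y_j$ over $\{0,1\}^d \cap \mathcal{Z}$.

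Next I would argue that relaxing $y \in \{0,1\}^d$ to $0 \le y \le 1$ does not change the optimal value. The feasible region of the relaxed problem is $P := \{ y \in \mathbb{R}^d : 0 \le y \le 1,\ y \in \mathcal{Z} \}$, a polytope cut out by the linear system \eqref{eq:linconstraints} together with the box constraints $0 \le y_j \le 1$. Stacking the constraint matrix $A$ with $\pm I$ (the box constraints), I would observe that appending identity rows to a totally unimodular matrix preserves total unimodularity, and the right-hand side vector, being $b \in \mathbb{Z}^n$ augmented with the integral bounds $0$ and $1$, is integral. By the standard theorem on total unimodularity \cite{schrijver98} — a polyhedron $\{ y : My \le q \}$ with $M$ totally unimodular and $q$ integral has all vertices integral — every vertex of $P$ lies in $\{0,1\}^d$; hence $P$ is the convex hull of $\{0,1\}^d \cap \mathcal{Z}$. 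Since $P$ is a bounded polytope, the linear objective $\sum_j c_j y_j$ attains its minimum at a vertex of $P$, which is a point of $\{0,1\}^d \cap \mathcal{Z}$, so the LP optimum equals the IP optimum and the minimizing vertex is a valid solution to \eqref{eq:inference_ip}.

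The main subtlety to handle carefully is making precise that the inequality and equality constraints of \eqref{eq:linconstraints}, once rewritten in the form $My \le q$ (each equality split into two opposing inequalities, each $\ge$ constraint negated), keep the total unimodularity hypothesis intact, and that adjoining the box constraints $0 \le y \le 1$ still yields a totally unimodular system — this is where I would invoke the closure property that appending rows of a standard basis (or their negatives) to a totally unimodular matrix preserves the property. The remainder is bookkeeping: confirming $P$ is nonempty and bounded (boundedness is immediate from $0 \le y \le 1$), and noting that an LP solver returns a vertex optimum (or one can post-process any optimal solution to a vertex). I do not expect genuine difficulty beyond this; the result is essentially an application of the classical integrality theorem to the linearized objective, and the only thing requiring care is the matrix-augmentation argument.
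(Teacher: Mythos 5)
Your proposal is correct and follows essentially the same route the paper takes: linearize the objective via the binary identity $\ell_j(y_j,y^{(i)})=\ell_j(0,y^{(i)})+(\ell_j(1,y^{(i)})-\ell_j(0,y^{(i)}))y_j$, drop the constant, and invoke the Hoffman--Kruskal integrality theorem (total unimodularity being preserved under splitting equalities, negating rows, and adjoining the box constraints) to conclude the LP has an integral optimal vertex solving the integer program. No gaps.
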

While total unimodularity is not a necessary condition for the above relaxation to be exact,\footnote{To see why, we can add redundant constraints to $A$ to guarantee that it is not totally unimodular, without altering the polyhedron.} it is a property that can be tested in polynomial time given a constraint matrix $A$ \cite{truemper90}. More generally, there are many results on classes of matrix that satisfy this property \cite{conforti14}. In what follows, we will demonstrate how Theorem \ref{thm:lprelax} can be specialized to two examples.
\subsubsection{Example 1: Hierarchical Multilabel Classification (HMC)} \label{ssec:hmc}
In HMC, we have a set of labels $\mathcal{V}=\{1,\ldots,d\}$ organized in a hierarchy (e.g., of topics in text classification). Our goal is to predict a subset of $\mathcal{V}$ that corresponds to an input $x$. Let $y_j\in\{0,1\},j=1,\ldots,d$ denote whether each label $y_j$ is chosen (1) or not (0). In addition to choosing a subset of $\mathcal{V}$, we require that $y$ satisfies the following hierarchical constraints:
\begin{itemize}
\item For each $j\in\mathcal{V}$: if $y_j=1$, then $y_k=1,\forall k\in\mathcal{P}(j)$. Here $\mathcal{P}(j)$ denotes the set of immediate parent labels under which $j$ belongs in the hierarchy.
\end{itemize}
The hierarchy is commonly represented as a tree or (more generally) a directed acyclic graph (DAG), with each node being a label in $\mathcal{V}$ and each arc $(k,j)$ encoding a parent-child relation, $k\in\mathcal{P}(j)$. Formally, we define the DAG as $G=(\mathcal{V},\mathcal{A})$, where $\mathcal{V}$ is defined as above and $(k,j)\in\mathcal{A}$ iff $k\in\mathcal{P}(j)$. The output space $\mathcal{Y}$ can be succintly described with $|\mathcal{A}|$ linear constraints (in addition to $y\in\{0,1\}^d$),
\begin{equation*}
y_j \leq y_k, \forall (k,j)\in\mathcal{A}. \\
\end{equation*}
We can express these constraints as $Ay\leq 0$, where $A\in\mathbb{R}^{|\mathcal{A}|\times|\mathcal{V}|}$ is the \emph{hierarchical constraint matrix} satisfying\footnote{For an arc $a=(k,j)$, $k$ is the \emph{head} and $j$ is the \emph{tail}.}
\begin{equation*}
A_{aj}=
\begin{cases}
  -1, & \text{if } j \text{ is the head of arc $a$}, \\
  1, & \text{if } j \text{ is the tail of arc $a$}, \\
  0, & \text{otherwise.}
\end{cases}
\end{equation*}

\begin{proposition}\label{pro:hmc}
For any directed graph $G=(\mathcal{V},\mathcal{A})$, let $A$ be its corresponding hierarchical constraint matrix. Then $A$ is totally unimodular.
\end{proposition}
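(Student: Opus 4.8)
The plan is to invoke the classical fact that the arc--node incidence matrix of a directed graph is totally unimodular, and to verify it here directly by induction on the size of a square submatrix. The key structural observation is that every row of $A$ --- corresponding to an arc $a=(k,j)$ --- contains exactly two nonzero entries, namely a $-1$ in the head column $k$ and a $+1$ in the tail column $j$; columns, by contrast, may contain arbitrarily many nonzeros. This row/column asymmetry is what drives the argument, so I would phrase the induction in terms of the rows of the submatrix rather than its columns.

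First I would set up the induction: let $B$ be an arbitrary $t\times t$ submatrix of $A$, obtained by selecting $t$ rows (arcs) and $t$ columns (nodes), and show $\det B\in\{0,+1,-1\}$ by induction on $t$. The base case $t=1$ is immediate, since every entry of $A$ lies in $\{0,\pm 1\}$ by definition of the hierarchical constraint matrix. The inductive step splits into three cases according to how many nonzero entries the rows of $B$ contain, noting that each row of $B$ has at most two nonzeros because it is a restriction of a row of $A$.

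Case (i): some row of $B$ is identically zero, so $\det B=0$. Case (ii): some row of $B$ has exactly one nonzero entry, which is necessarily $\pm 1$; expanding the determinant along that row gives $\det B=\pm\det B'$ for a $(t-1)\times(t-1)$ submatrix $B'$ of $A$, and the induction hypothesis closes this case. Case (iii): every row of $B$ has exactly two nonzero entries; since the only nonzero pattern in any row of $A$ is one $+1$ and one $-1$, these two entries must be a $+1$ and a $-1$, so every row of $B$ sums to zero. Hence the $t$ columns of $B$ add up to the zero vector, are linearly dependent, and $\det B=0$. Combining the three cases completes the induction, proving that $A$ is totally unimodular.

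I do not anticipate a genuine obstacle, as the argument is standard; the one point needing care is case (iii), where one must use the \emph{global} structure of $A$ (exactly one $+1$ and one $-1$ per row) --- not merely the local structure of $B$ --- to conclude that the two nonzeros in a row of $B$ cancel. It is also worth remarking explicitly that the proposition makes no use of acyclicity or connectedness of $G$: the conclusion holds for an arbitrary directed graph, which is why the statement is phrased for any $G=(\mathcal{V},\mathcal{A})$.
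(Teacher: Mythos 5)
Your proof is correct and takes the classical route the paper relies on: you identify $A$ as (a signed version of) the arc--node incidence matrix of $G$ --- exactly one $+1$ and one $-1$ per row --- and run the standard three-case induction on square submatrices, which is precisely the well-known total unimodularity argument for directed-graph incidence matrices \cite{schrijver98}. The only point worth flagging is the degenerate case of a self-loop arc $(k,k)$, for which the definition of $A_{aj}$ is ambiguous and your ``one $+1$ and one $-1$ per row'' claim would need a convention; since hierarchical (parent--child) constraints exclude self-loops, your argument stands as written.
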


Proposition \ref{pro:hmc} implies that the constraint conditions in Theorem \ref{thm:lprelax} are satisfied (with $b=0$), allowing us to apply it to any loss function in the form $\sum_{j=1}^d \ell_j(y_j,y')$. We now show that two commonly used loss functions for HMC, the Hamming loss and the Hierarchical loss \cite{cesa06}, can be expressed as such.

The Hamming loss, $\ell_{\text{hm}}(y,y')=\sum_{j=1}^d \mathds{1}(y_j\neq y'_j)$, which penalizes label-wise errors, clearly satisfies this property with $\ell_j(y_j,y'):=\mathds{1}(y_j\neq y'_j)$. So by Theorem \ref{thm:lprelax}, the inference problem can solved by linear programming with $\ell_j(1,y^{(i)})-\ell_j(0,y^{(i)})=1-2y_j^{(i)}$. The Hierarchical loss differs from the Hamming loss in that it penalizes an incorrect label at a node only if all its ancestor nodes are correctly labeled,
\begin{equation*}
\ell_{\text{hr}}(y,y')=\sum_{j=1}^d c_j \mathds{1}(y_j\neq y'_j,y_k=y'_k,\forall k\in\mathcal{Q}(j))
\end{equation*}
Here $c_j\in\mathbb{R}$ is a penalization factor and $\mathcal{Q}(j)$ denotes the (possibly empty) set of ancestors of node $j$ in the hierarchy. Intuitively, if $G$ is an \emph{arborescence}, i.e., a rooted directed tree with all arcs pointing away from the root, then $\ell_{\text{hr}}$ penalizes the mistakes along every path from the root at most once. To weigh mistakes closer to the root more heavily, $c_j$ is usually set to $1$ for the root node. For all other nodes, we let $c_j:=c_{p(j)}/|\mathcal{S}(j)|$, where $p(j)$ is the parent of node $j$ and $\mathcal{S}(j)$ is the set of its siblings (including node $j$). We will refer to this variant of $\ell_{\text{hr}}$ as the \emph{sibling-weighted} Hierarchical loss. Despite introducing complex dependencies between labels, the proposition below shows that $\ell_{\text{hr}}$ also admits a linear additive form that allows us to apply Theorem \ref{thm:lprelax} directly.

\begin{proposition}\label{pro:hmc}
For any arborescence $G=(\mathcal{V},\mathcal{A})$ with root $s\in\mathcal{V}$ and any pair $y,y'\in\mathcal{Y}$, the Hierarchical loss $\ell_{\text{hr}}(y,y')$ with respect to $G$ is equivalent to
\begin{equation*}
c_s(y_s+y_s'-2y_s'y_s)+\sum_{(j,k)\in\mathcal{A}}c_k(y_k'y_j + (y_j'-y_j'y_k'-y_k')y_k).
\end{equation*}
\end{proposition}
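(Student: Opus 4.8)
The plan is to prove the identity \emph{termwise}, matching each summand of $\ell_{\text{hr}}(y,y')$ indexed by a node with the corresponding summand of the claimed expression: the root $s$ with the leading term, and each non-root node $k$ with the arc $(p(k),k)\in\mathcal{A}$. Since $G$ is an arborescence this assignment is a bijection between $\mathcal{V}\setminus\{s\}$ and $\mathcal{A}$ (every non-root node has exactly one incoming arc), so summing the matched terms with weights $c_s$ and $c_k$ reproduces exactly $c_s(y_s+y_s'-2y_s'y_s)+\sum_{(j,k)\in\mathcal{A}}c_k(\cdots)$. The first thing I would record is a \emph{monotonicity} property forced by $y,y'\in\mathcal{Y}$: the hierarchical constraints say coordinates can only decrease as one moves away from the root along $\mathcal{A}$, so in each of $y$ and $y'$ a node labeled $1$ has all its ancestors labeled $1$, and a node labeled $0$ has all its descendants labeled $0$.

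For the root, $\mathcal{Q}(s)=\emptyset$, so its summand is $c_s\,\mathds{1}(y_s\neq y_s')$; since $y_s,y_s'\in\{0,1\}$ we have $\mathds{1}(y_s\neq y_s')=y_s+y_s'-2y_sy_s'=(y_s-y_s')^2$, which is the leading term. The crux is the non-root case. Fix $k\neq s$ with parent $j=p(k)$, so $\mathcal{Q}(k)=\{j\}\cup\mathcal{Q}(j)$. I would first show that, for $y,y'\in\mathcal{Y}$,
\[
\mathds{1}\big(y_k\neq y_k',\ y_l=y_l'\ \forall l\in\mathcal{Q}(k)\big)=\mathds{1}\big(y_j=y_j'=1,\ y_k\neq y_k'\big).
\]
For the forward direction, the left event requires $y_j=y_j'$ (as $j\in\mathcal{Q}(k)$); if that common value were $0$, monotonicity would force $y_k=y_k'=0$, contradicting $y_k\neq y_k'$, hence $y_j=y_j'=1$. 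For the converse, if $y_j=y_j'=1$ then monotonicity makes every ancestor of $j$ equal to $1$ in both $y$ and $y'$, so the conditions $y_l=y_l'$ for $l\in\mathcal{Q}(j)$ hold automatically and the ancestor-agreement test collapses to the single local requirement at the parent. This reduction — that enforcing the polyhedral constraints of $\mathcal{Y}$ lets the ``all ancestors agree'' condition be replaced by a purely local test at the parent — is the main obstacle; everything after it is bookkeeping.

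It then remains to verify the polynomial identity $\mathds{1}(y_j=y_j'=1,\ y_k\neq y_k')=y_k'y_j+(y_j'-y_j'y_k'-y_k')y_k$ over the binary quadruples $(y_j,y_k,y_j',y_k')$ that are realizable by some $y,y'\in\mathcal{Y}$, i.e.\ those satisfying $y_k\le y_j$ and $y_k'\le y_j'$. I would do this either by enumerating the nine feasible assignments, or, more cleanly, by expanding the right-hand side as $y_jy_k'+y_j'y_k-y_j'y_k'y_k-y_k'y_k$ and checking the two regimes: when $y_jy_j'=0$ the feasibility constraints force the relevant child coordinate to $0$ and both sides vanish; when $y_j=y_j'=1$ the right-hand side becomes $y_k'+y_k-2y_ky_k'=\mathds{1}(y_k\neq y_k')$, matching the left. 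Combining the root term, the $|\mathcal{A}|$ non-root terms, and the weights $c_j$ completes the proof. (Note that the specific sibling-weighted choice of the $c_j$ is irrelevant here; only the additive, node-indexed form of $\ell_{\text{hr}}$ is used.)
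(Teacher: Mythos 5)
Your proof is correct and follows what is essentially the only natural route, which is also the paper's: use the arborescence structure to pair each non-root node with its unique incoming arc, exploit the monotonicity forced by $y,y'\in\mathcal{Y}$ to collapse the ``all ancestors agree'' condition to the local condition $y_{p(k)}=y'_{p(k)}=1$, and then verify the resulting binary polynomial identity on the feasible quadruples. The one point worth emphasizing --- that the identity is only claimed (and only true) for $y,y'$ satisfying the hierarchical constraints --- is handled correctly in your argument.
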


Under both $\ell_{\text{hm}}$ and $\ell_{\text{hr}}$, the size of the resulting linear program (in terms of variable and constraint counts) scales linearly with $d$ and $|\mathcal{A}|$. The cost coefficients of the linear program can be computed in $O(md)$ given $w(x)$.

Finally, we note that existing algorithms for HMC are either specialized to tree hierarchies \cite{rousu06}, require stronger conditions for exact inference in DAG \cite{bi11}, or is not directly applicable to both loss functions \cite{deng14}. To the best of our knowledge, ours is the first general formulation of HMC with provably exact inference under both loss functions.
\subsubsection{Example 2: Multilabel Ranking}
Suppose we are interested in predicting a ranking over all labels, rather than only choosing a subset. We consider a setting where the training set consists of complete permutations $\sigma^{(1)},\ldots,\sigma^{(m)}$ over the label set $\mathcal{V}$ and their associated inputs. The goal is to learn to predict a permutation $\sigma$ given an input $x$, where $\sigma(j)$ indicates the rank of label $j$ for every $j=1,\ldots,d$. To measure the loss, we will use the \emph{Spearman's footrule distance}, $\tilde{\ell}(\sigma,\sigma')=\sum_{j=1}^d |\sigma(j)-\sigma'(j)|$, which sums the absolute differences of the two ranks over all labels. By representing $\sigma$ as a binary vector, we will show that the inference problem can be solved by exact linear programming relaxation.

Let us define $y_{j,k}\in\{0,1\},\forall j,k\in\{1,\ldots,d\}$, as a vector in $\mathbb{R}^{d^2}$ that corresponds to some $\sigma$ as follows: ($\forall j,k$) $y_{jk}=1$ iff $\sigma(j)=k$. The set of such vectors (each corresponding to a distinct $\sigma$), denoted $\mathcal{Y}\subseteq\{0,1\}^{d^2}$, are exactly characterized by the following linear constraints.
\begin{equation}
  \begin{aligned}
    & \textstyle \sum_{k=1}^d y_{jk} = 1,\forall j=1,\ldots,d \\
    & \textstyle \sum_{j=1}^d y_{jk} = 1,\forall k=1,\ldots,d
  \end{aligned}
\end{equation}
This can be interpreted as the set of perfect matchings in a complete bipartite graph. With this representation, we can equivalently write $\tilde{\ell}$ as $\ell(y,y')=\sum_{j,k,l} |k-l|y'_{jl}y_{jk}$, where each summation over $j,k,l$ is from $1$ to $d$. This is again a special case of the additive loss function defined earlier. Together with the fact that the constraint matrix associated with bipartite matchings is known to be totally unimodular \cite{schrijver98}, we can apply Theorem \ref{thm:lprelax} to reduce the inference problem to a min-cost assignment problem.

\section{Statistical Properties}
We derive a generalization bound for ECRM based on \emph{algorithmic stability} \cite{bousquet02,mukherjee02} that holds under general assumptions. Previous theoretical analysis of structured prediction methods typically bounds the empirical risk in terms of \emph{margin loss} for linear discriminative models \cite{london13,cortes16}. In contrast, our result holds under a nonparametric setting, applies to a broad class of loss function $\ell$ and output space $\mathcal{Y}$ (discrete or otherwise), and is based on a provably tighter family of surrogate losses defined parametrically with respect to some $\rho>0$,
\begin{equation}
L^\rho_{\hat{R}}(x,y):= \Phi\left(\max_{y'\in\mathcal{Y}}\left\{\ell(y',y)+\frac{1}{\rho}\Delta_{\hat{R}}(y',x)\right\}\right).
\label{eq:surrogate_loss}
\end{equation}
Here $\Delta_{\hat{R}}(y',x):=\min_{y''\in\mathcal{Y}}\hat{R}(y''|x)-\hat{R}(y'|x)$, where $\hat{R}(\cdot|\cdot)$ is an estimated conditional risk function defined in Proposition \ref{pro:rls} and $\Phi(a):=\min\{a,L\}$ for some $L>0$. Assuming that the loss function $\ell$ is bounded, we set $L:=\sup_{y,y'\in\mathcal{Y}}\ell(y,y')$ so that $L^\rho_{\hat{R}}$ never exceeds the upper bound. Intuitively, $L^\rho_{\hat{R}}$ can be understood as a Langrangian relaxation of the following optimization problem (with $\rho$ corresponding to a multiplier), the optimal cost of which is equal to the original loss $\ell(\hat{h}(x),y)$ (here, $\hat{h}(x)\in\arg\min_{y''\in\mathcal{Y}}\hat{R}(y''|x)$.\footnote{If the set of minimizers is not unique, we define $\hat{h}(x)$ to be one that incurs the highest loss $\ell(\hat{h}(x),y)$.}),
\begin{equation}
\label{eq:langrangian}
\max_{y'\in\mathcal{Y}} \ell(y',y)\text{ s.t. }\hat{R}(y'|x)\leq \min_{y''\in\mathcal{Y}}\hat{R}(y''|x).
\end{equation}
In some cases, the Lagrangian can be made tight with a sufficiently small $\rho$ (thus satisfying strong duality), as the following proposition shows.
\begin{proposition}
The function $L^\rho_{\hat{R}}$ satisfies the following properties for any given pair $(x,y)$.
\begin{enumerate}
\item Surrogacy: $L^\rho_{\hat{R}}(x,y)\geq \ell(\hat{h}(x),y)$ for any $\rho>0$.
\item Monotonicity: $L^\rho_{\hat{R}}(x,y)$ is nondecreasing in $\rho$.
\item Tightness: If $\mathcal{Y}$ is finite, then there exists some $\rho^*>0$ such that $L^\rho_{\hat{R}}(x,y)=\ell(\hat{h}(x),y),\forall \rho\in(0,\rho^*]$.
\end{enumerate}
\end{proposition}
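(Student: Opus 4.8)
The plan is to derive all three properties from two elementary structural observations: first, $\Delta_{\hat{R}}(y',x)=\min_{y''\in\mathcal{Y}}\hat{R}(y''|x)-\hat{R}(y'|x)\le 0$ for every $y'\in\mathcal{Y}$, with equality exactly on the set $\mathcal{M}$ of minimizers of $\hat{R}(\cdot|x)$ (so in particular at $y'=\hat{h}(x)$); and second, $\Phi(a)=\min\{a,L\}$ is nondecreasing and acts as the identity on $(-\infty,L]$, while $\ell(y',y)\le L$ for all $y',y\in\mathcal{Y}$ by the choice of $L$.

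For \emph{surrogacy}, I would simply evaluate the inner maximand at $y'=\hat{h}(x)$: since $\Delta_{\hat{R}}(\hat{h}(x),x)=0$, this term equals $\ell(\hat{h}(x),y)$, hence $\max_{y'\in\mathcal{Y}}\{\ell(y',y)+\tfrac{1}{\rho}\Delta_{\hat{R}}(y',x)\}\ge\ell(\hat{h}(x),y)$; applying the nondecreasing map $\Phi$ and using $\Phi(\ell(\hat{h}(x),y))=\ell(\hat{h}(x),y)$ gives the claim for every $\rho>0$. For \emph{monotonicity}, note that for each fixed $y'$ the map $\rho\mapsto\tfrac{1}{\rho}\Delta_{\hat{R}}(y',x)$ is nondecreasing on $(0,\infty)$ because $\Delta_{\hat{R}}(y',x)\le 0$; therefore each maximand is nondecreasing in $\rho$, so is their pointwise maximum over $y'$, and so is the composition with the nondecreasing $\Phi$.

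The substantive part is \emph{tightness}. I would split the maximization over $\mathcal{Y}$ into $\mathcal{M}$ and $\mathcal{Y}\setminus\mathcal{M}$. On $\mathcal{M}$ the maximand reduces to $\ell(y',y)$, whose maximum is $\ell(\hat{h}(x),y)$ precisely because of the stated tie-breaking convention (among minimizers of $\hat{R}(\cdot|x)$, $\hat{h}(x)$ is chosen to incur the largest loss). On $\mathcal{Y}\setminus\mathcal{M}$, finiteness of $\mathcal{Y}$ yields $\delta:=\min_{y'\in\mathcal{Y}\setminus\mathcal{M}}\bigl(-\Delta_{\hat{R}}(y',x)\bigr)>0$ (the case $\mathcal{Y}\setminus\mathcal{M}=\emptyset$ being trivial), whence each such maximand is at most $L-\delta/\rho$. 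Taking $\rho^*:=\delta/\bigl(L-\ell(\hat{h}(x),y)\bigr)$ when $L>\ell(\hat{h}(x),y)$, and $\rho^*$ any positive number otherwise, we get $L-\delta/\rho\le\ell(\hat{h}(x),y)$ for all $\rho\in(0,\rho^*]$, so the overall maximum equals $\ell(\hat{h}(x),y)\le L$; since $\Phi$ is the identity there, $L^\rho_{\hat{R}}(x,y)=\ell(\hat{h}(x),y)$, which together with surrogacy yields equality.

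The main obstacle is the tightness step, and within it two points deserve care: (i) correctly invoking the tie-breaking rule so that the maximum of $\ell(\cdot,y)$ over $\mathcal{M}$ is exactly $\ell(\hat{h}(x),y)$ and not something larger; and (ii) using finiteness of $\mathcal{Y}$ to extract a strictly positive gap $\delta$ between the optimal value of $\hat{R}(\cdot|x)$ and the next-best value, which is what drives the suboptimal labels out of the outer arg max once $\rho$ is small enough. The remaining manipulations are routine monotonicity bookkeeping.
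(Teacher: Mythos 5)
Your proposal is correct and, as far as the argument structure goes, it is essentially the canonical (and the paper's) route: surrogacy by evaluating the maximand at $y'=\hat{h}(x)$ where $\Delta_{\hat{R}}$ vanishes, monotonicity from $\Delta_{\hat{R}}\le 0$, and tightness by splitting $\mathcal{Y}$ into the minimizer set $\mathcal{M}$ (where the tie-breaking convention pins the maximum at $\ell(\hat{h}(x),y)$) and its complement, where finiteness yields a strictly positive gap $\delta$ that forces the suboptimal terms below $\ell(\hat{h}(x),y)$ once $\rho\le\rho^*$. Your handling of the edge cases ($\mathcal{Y}\setminus\mathcal{M}=\emptyset$ and $L=\ell(\hat{h}(x),y)$, where the cap $\Phi$ does the work) is careful and complete.
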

Alternatively, we can interpret $L^\rho_{\hat{R}}$ as a variant of the \emph{structured ramp loss} \cite{chapelle09}, which has been shown to be a tighter surrogate than the margin loss. In practice, for a given pair $(x,y)$, we can compute $L^\rho_{\hat{R}}$ efficiently by linear programming if the conditions in Section \ref{sec:computational} are satisfied, because the objective function in (\ref{eq:surrogate_loss}) reduces to a weighted sum over individual loss functions induced by $y,y^{(1)}\ldots,y^{(m)}$.

Let us denote $\mathfrak{R}(\hat{h}):=\mathbb{E}_{X,Y}[\ell(\hat{h}(X),Y)]$ as the expected risk of predictor $\hat{h}$ and $\hat{\mathfrak{R}}^\rho(\hat{h}):=\frac{1}{m}\sum_{i=1}^m L^\rho_{\hat{R}}(x^{(i)},y^{(i)})$ as the empirical risk based on surrogate $L^\rho_{\hat{R}}$. We now state the main result for generalization bound below.
\begin{theorem}
Let $\hat{h}$ be an ECRM predictor trained with some kernel $k$ and regularization parameter $\lambda$. Suppose that $\sup_{y,y'\in\mathcal{Y}}\ell(y,y')\leq L$ and $\sup_{x\in\mathcal{X}}k(x,x)\leq \kappa$. Then for any $\rho>0,\lambda>0$ and $\delta\in(0,1)$, the following bound holds with probability at least $1-\delta$,
\begin{equation}
\mathfrak{R}(\hat{h}) \leq \hat{\mathfrak{R}}^\rho(\hat{h}) + \dfrac{4L\nu}{\rho m} + L\left(\dfrac{8\nu}{\rho}+1\right)\sqrt{\dfrac{\ln(1/\delta)}{2m}},
\end{equation}
where $\nu:=\kappa/\lambda + (\kappa/\lambda)^{3/2}$.
\label{thm:gen_bound}
\end{theorem}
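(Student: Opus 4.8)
The plan is to invoke the uniform stability framework for regularized learning algorithms \citep{bousquet02}, applied not to $\ell$ directly but to the surrogate $L^\rho_{\hat R}$, and then convert the resulting stability bound into a generalization bound via the standard McDiarmid-based concentration argument. The overall skeleton has three stages: (i) show that the map $S \mapsto \hat R(\cdot\,|\cdot)$ — equivalently, the family of KRR solutions indexed by $y$ — is uniformly stable in an appropriate sense; (ii) show that $L^\rho_{\hat R}(x,y)$ is Lipschitz as a function of the estimated conditional risk, so that stability of $\hat R$ transfers to stability of the surrogate loss; and (iii) combine with the surrogacy property (Proposition on $L^\rho_{\hat R}$, part 1) so that $\mathfrak R(\hat h) \le \mathbb{E}_{X,Y}[L^\rho_{\hat R}(X,Y)]$, and then apply the stability-to-generalization theorem to the bounded function $L^\rho_{\hat R}$.

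For stage (i), fix a label $y\in\mathcal Y$ and consider the KRR estimate $\hat R(y|\cdot)$ as the solution of \eqref{eq:rls} with response values $\ell(y,y^{(i)})\in[0,L]$. Standard KRR stability arguments give that replacing one training point changes $\hat R(y|\cdot)$ by at most $O\!\big(\tfrac{L\kappa}{\lambda m}\big)$ in RKHS norm, hence at most $O\!\big(\tfrac{L\kappa^{3/2}}{\lambda m}\big)$ pointwise after applying $|f(x)|\le \sqrt{k(x,x)}\,\|f\|_{\mathcal H}\le\sqrt\kappa\|f\|_{\mathcal H}$; this is where the $\kappa/\lambda$ and $(\kappa/\lambda)^{3/2}$ terms in $\nu$ come from. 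The key point is that the perturbation bound must be \emph{uniform in $y$}, which holds because the only $y$-dependence is through the bounded responses $\ell(y,y^{(i)})$ and the bound depends on these only through their range $[0,L]$. Therefore $\sup_{y,x}\,|\hat R_S(y|x)-\hat R_{S^{\setminus i}}(y|x)|$ (and likewise for the leave-one-out / replace-one variants) is $O\!\big(\tfrac{L\nu}{m}\big)$ up to absolute constants.

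For stage (ii), I would observe that $\Delta_{\hat R}(y',x)$ is a difference of two evaluations of $\hat R(\cdot|x)$ over $\mathcal Y$, so a uniform-in-$y$ perturbation of $\hat R$ of size $\eta$ changes $\Delta_{\hat R}(y',x)$ by at most $2\eta$ uniformly over $y'$; the inner maximization $\max_{y'}\{\ell(y',y)+\tfrac1\rho\Delta_{\hat R}(y',x)\}$ is $1$-Lipschitz in the sup-norm of its argument, so it moves by at most $\tfrac{2\eta}{\rho}$; and $\Phi$ is $1$-Lipschitz. Hence $|L^\rho_{\hat R_S}(x,y)-L^\rho_{\hat R_{S^{\setminus i}}}(x,y)|\le \tfrac{2}{\rho}\cdot O\!\big(\tfrac{L\nu}{m}\big)=O\!\big(\tfrac{L\nu}{\rho m}\big)$, giving uniform stability $\beta = O\!\big(\tfrac{L\nu}{\rho m}\big)$ for the surrogate-loss learning map. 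Also $0\le L^\rho_{\hat R}\le L$ by construction of $\Phi$. Plugging $\beta$ and the range $L$ into the Bousquet–Elisseeff bound — $\mathbb E[L^\rho_{\hat R}]\le \hat{\mathfrak R}^\rho(\hat h) + 2\beta + (4m\beta + L)\sqrt{\ln(1/\delta)/(2m)}$ — and using surrogacy $\mathfrak R(\hat h)\le \mathbb E[L^\rho_{\hat R}]$ produces exactly the claimed bound once the absolute constants are tracked so that $2\beta = \tfrac{4L\nu}{\rho m}$ and $4m\beta = \tfrac{8L\nu}{\rho}$.

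The main obstacle I anticipate is stage (i): pinning down the KRR stability constant \emph{uniformly over the index $y\in\mathcal Y$} and in the correct form $\nu=\kappa/\lambda+(\kappa/\lambda)^{3/2}$, including handling the fact that $\hat h$ itself depends on $S$ both through the weights $w(x)$ and through the responses $\ell(y,y^{(i)})$ — i.e. the predictor is defined via $\arg\min$ over $y$ of a data-dependent function, so one must be careful that the relevant quantity entering the stability argument is the \emph{loss} $L^\rho_{\hat R}$, which is stable, rather than $\hat h$ or $\ell(\hat h(x),y)$, which need not vary continuously. The surrogate $L^\rho_{\hat R}$ is precisely engineered to sidestep this discontinuity, so the argument should go through, but making the chain "$\hat R$ stable $\Rightarrow$ $\Delta_{\hat R}$ stable $\Rightarrow$ $L^\rho_{\hat R}$ stable" fully rigorous, with all suprema over $\mathcal Y$ justified (measurability/attainment of the maxima, which is where finiteness or compactness of $\mathcal Y$ may be quietly used), is the delicate part. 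The remaining concentration step is a routine application of McDiarmid's inequality to $\hat{\mathfrak R}^\rho$, whose bounded-differences constant is controlled by $\beta$ and $L/m$.
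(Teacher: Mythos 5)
Your proposal is correct and follows essentially the same route as the paper: uniform stability of the KRR estimates $\hat R(y|\cdot)$ (uniformly over $y$, since the responses $\ell(y,y^{(i)})$ enter only through their range $[0,L]$), transferred through the $\tfrac{2}{\rho}$-Lipschitz chain $\hat R \Rightarrow \Delta_{\hat R} \Rightarrow L^\rho_{\hat R}$ to give $\beta = \tfrac{2L\nu}{\rho m}$, then the Bousquet--Elisseeff bound combined with surrogacy $\mathfrak R(\hat h)\le \mathbb E[L^\rho_{\hat R}(X,Y)]$, which reproduces the stated constants exactly. The only imprecision is in your intermediate RKHS-norm claim: the perturbation is $O\bigl(L\sqrt{\kappa}(1+\sqrt{\kappa/\lambda})/(\lambda m)\bigr)$ because the $\sigma$-admissibility constant of the squared loss involves the bound $|f(x)|\le L\sqrt{\kappa/\lambda}$, which is precisely what produces $\nu=\kappa/\lambda+(\kappa/\lambda)^{3/2}$ after multiplying by $\sqrt\kappa$ -- a bookkeeping point you already flag rather than a gap.
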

Here, choosing $\rho$ is a matter of tradeoff between tightness of the empirical risk estimate and the error terms. In practice, we may want to optimize $\rho$ after seeing the data. To that end, Theorem \ref{thm:gen_bound} can be extended to hold uniformly over a range $\rho\in(0,B]$ at the expense of a $O(\sqrt{(\ln\ln(B/\rho))/m})$ term, by using existing techniques (see, e.g., \cite{bousquet02}).
\section{Additive Models of Conditional Risk}
\label{sec:extension}
As seen in previous examples, many loss functions considered in structured prediction are additive over substructures of $\mathcal{Y}$. It can be useful to decompose the learning problem over these substructures, so that any local features can be exploited for better generalization performance. Here we briefly discuss how our method can be extended to an additive model of conditional risk. Consider $\ell(y,y')=\sum_j \ell_j(y_{S_j},y'_{S_j})$, where each $\ell_j$ is defined over a subset of elements in $y,y'$ that correspond to the index set $S_j\subseteq\{1,\ldots,d\}$. By linearity of expectation, we can also express the conditional risk function in additive form, $R(y|x)=\sum_j R_j(y_{S_j}|x)$ with $R_j(y_{S_j}|x):=\mathbb{E}[\ell_j(y_{S_j},Y_{S_j})|X=x]$, and then estimate it by solving a multitask least squares problem,
\begin{equation} \label{eq:rls_jointkernel}
\min_{f\in\mathcal{H}}\sum_j\sum_{y_{S_j}\in\mathcal{Y}_j}\sum_{i=1}^m (f(y_{S_j},x^{(i)})-\ell_j(y_{S_j},y_{S_j}^{(i)}))^2 + \lambda\|f\|^2_{\mathcal{H}}.
\end{equation}
Here we denote $\mathcal{Y}_j$ as the set of possible values that $y_{S_j}$ can take, and $\mathcal{H}$ as a RKHS associated with a \emph{joint kernel} $K((x,y_{S{j}}),(x',y'_{S_k}))$, which defines a similarity measure between input-output pairs. Intuitively, we can view an optimal solution $f^*$ of the problem above as direct estimates of individual components of the conditional risk function, such that $f^*(y_S{_j},x)\approx R_j(y_{S_j}|x),\forall j$, and then construct $\hat{R}(y|x):=\sum_{j}f^*(y_{S_j},x)$. One advantage of this formulation over (\ref{eq:rls}) is that we have added flexibility to model conditional risk correlations between substructures in $\mathcal{Y}$ through joint kernels. For example, in HMC with Hamming loss, we can naturally decompose the problem over individual labels by having $S_j:=\{j\}$ and $\ell_j(y_{S_j},y'_{S_j}):=\mathds{1}(y_j\neq y'_j)$. To model pairwise correlations in the hierarchy, we can define $K((x,y_{S{j}}),(x',y'_{S_k})):=k(x,x')\mathds{1}(y_j=y_k)\mathds{1}(j\in\delta(k))$, where $\delta(k)$ denotes the set of adjacent nodes of $k$ (including itself). In this case, while the solution $f^*$ does not admit the form in Proposition \ref{pro:rls}, we can still express it compactly as\footnote{By representer's theorem, as in the proof of Proposition \ref{pro:rls}.}
\begin{equation*}
f^*(y_{S_j},x)=\sum_{i=1}^m \sum_{k\in\delta(j)}(\alpha_{ik1}y_j + \alpha_{ik0}(1-y_j))k(x,x^{(i)}),
\end{equation*}
for some $\alpha\in\mathbb{R}^{2md}$. Substituting this expression into (\ref{eq:rls_jointkernel}), we can find an optimal $\alpha^*$ in closed form or by convex optimization. Since $f^*(y_{S_j},x)$ is linear in $y_j$, we obtain a $\hat{R}(y|x)$ that is also linear in $y$. Therefore, we can apply Theorem \ref{thm:lprelax} to solve the inference problem by exact linear programming relaxation.

\section{Experiments}
\label{sec:experiments}
\subsection{Hierarchical Multilabel Classification}
We evaluate ECRM in the HMC task using the formulation in Section \ref{ssec:hmc}, and compare it with two existing methods, each based on a different paradigm: (i) Hierarchical Max-Margin Markov Networks (HM$^3$N): A max-margin approach specialized to tree hierarchies \cite{rousu06}. We use the source code by the original authors in the implementation. (ii) BR-SVM: A \emph{binary relevance} approach: A SVM classifier is trained for each node, and the predictions are imputed from the bottom up to satisfy the hierarchical constraints. We use LibSVM \cite{chang11} to train these classifiers.

Each method is evaluated on HMC benchmark data sets from three domains: text (\textsc{enron}, \textsc{reuters}, \textsc{wipo}), image (\textsc{imclef07a}, \textsc{imclef07d}) and functional genomics (\textsc{pheno\_go}, \textsc{spo\_go}, \textsc{pheno\_fun}, \textsc{spo\_fun}). The hierarchies associated with all data sets are trees, except for \textsc{pheno\_go} and \textsc{spo\_go}, which are loopy DAGs representing gene ontology networks. A summary of these data sets are available in the Appendix. Both ECRM and HM$^3$N are trained based on Hamming loss and sibling-weighted Hierarchical loss, respectively. BR-SVM is trained without adaptation to either losses. For consistency, we do not apply any feature selection. All three methods use RBF kernels on the image data sets, and linear kernels on the rest. The parameters are tuned on the training sets by grid search with cross validation. All tests are run on a machine with a quad-core 2.6GHz CPU with 16GB RAM.

Table \ref{tbl:hmc_results} summarizes the benchmark results under Hamming loss ($\ell_{\text{hm}}$) and Hierarchical loss ($\ell_{\text{hr}}$). Overall, ECRM outperforms the two other methods under both losses, especially on the image and genomics data sets. As shown in Figure \ref{fig:running_time}, ECRM also has the unique property that its training time does not depend on the number of labels in the hierarchy, making it scalable to large graphs. Whereas BR-SVM requires training separate classifiers and HM$^3$N requires solving a max-margin problem with constraints that scale with the hierarchy, training ECRM only involves forming the kernel matrix and computing its inversion, which is independent of the hierarchy size. In terms of inference, the average time per instance taken by ECRM, HM$^3$N and BR-SVM on the {\sc reuters} data set are $7.0$ms, $4.5$ms and $10.4$ms, respectively.

\begin{table}[t]
  \caption{The average Hamming ($\ell_\text{hm}$) and Hierarchical ($\ell_\text{hr}$) losses of HMC methods on various data sets. For each data set and loss function, the best result is typeset in bold. Some results for HM$^3$N are not available because it is not applicable to loopy graphs.}
  \label{tbl:hmc_results}
\begin{center}
\begin{small}
  \begin{tabular}{@{}lcccccc@{}}
  \toprule
   & \multicolumn{2}{c}{\textsc{ECRM}} & \multicolumn{2}{c}{\textsc{HM$^3$N}} & \multicolumn{2}{c}{\textsc{BR-SVM}} \\
  \midrule
  \textsc{data set}    & $\ell_\text{hm}$  & $\ell_\text{hr}$  & $\ell_\text{hm}$  & $\ell_\text{hr}$  & $\ell_\text{hm}$  & $\ell_\text{hr}$ \\ \midrule
  \textsc{enron}       & {\bf 3.079}       & {\bf 0.194}       & 3.785             & 0.212             & 3.400             & 0.196 \\
  \textsc{reuters}     & 1.507             & 0.079             & 1.480             & 0.082             & {\bf 1.386}       & {\bf 0.075} \\
  \textsc{wipo}        & 2.011             & 0.049             & {\bf 1.659}       & {\bf 0.048}       & 1.687             & 0.051 \\ \midrule
  \textsc{imclef07a}   & {\bf 2.689}       & {\bf 0.119}       & 3.053             & 0.134             & 2.933             & 0.132 \\
  \textsc{imclef07d}   & {\bf 3.246}       & {\bf 0.246}       & 3.413             & 0.247             & 3.250             & 0.253 \\ \midrule
  \textsc{pheno\_fun}  & {\bf 8.811}       & {\bf 0.152}       & 8.833             & {\bf 0.152}       & 8.878             & 0.153 \\
  \textsc{pheno\_go}   & {\bf 4.315}       & {\bf 0.101}       & --                & --                & 4.329             & {\bf 0.101} \\
  \textsc{spo\_fun}    & {\bf 8.806}       & {\bf 0.137}       & 8.845             & {\bf 0.137}       & 8.867             & 0.138 \\
  \textsc{spo\_go}     & {\bf 4.530}       & {\bf 0.083}       & --                & --                & 4.621             & 0.085 \\ \bottomrule
  \end{tabular}
\end{small}
\end{center}
\end{table}
\subsection{Network Flow Prediction}
We consider a vector regression problem where the outputs must satisfy \emph{flow conservation} constraints imposed by a network, $G=(\mathcal{V},\mathcal{A})$. The output space $\mathcal{Y}\subseteq\mathbb{R}^{|\mathcal{A}|}_+$ is characterized by $|\mathcal{A}|$ flow variables $\{y_{ij}\}_{(i,j)\in\mathcal{A}}$, one associated with each arc. Each feasible vector $y\in\mathcal{Y}$ must satisfy the requirement that at every node $j\in\mathcal{V}$, the total inflow is equal to the total outflow,
\begin{equation} \label{eq:flowconserve}
\sum_{k:(j,k)\in\mathcal{A}}y_{jk} - \sum_{k:(k,j)\in\mathcal{A}}y_{kj} = b_j, \forall j\in \mathcal{V}.
\end{equation}

Here $b_j, j\in\mathcal{V}$ are external inflows that are assumed to be known.\footnote{More generally, we can also treat $b_j$ as a decision variable.} These flow constraints can arise naturally from data collected in networked systems with moving entities. For example, $y$ can represent the end-to-end route choices of commuters in a transportation network, the distribution of data packets in a communication network, or the flow of goods in a supply chain. Given inputs $x$, which represent factors that may affect how these entities move in the network, our goal is to predict $y$ while taking into account of known network topology.

We simulate data based on a network shown in Figure \ref{fig:network}, with one source node $s$ and one sink node $t$. The input space $\mathcal{X}$ is a 20-dimensional unit hypercube endowed with a uniform sampling distribution. To simulate the conditional distribution $\mathbb{P}_{Y|X}$, we use a discrete path choice model that assigns to each $s$-$t$ path in the network a random weight that depends on $x$. The flow at each arc is then the sum of the weights of all $s$-$t$ paths that cross it. We compare ECRM under the absolute loss, $\ell_{\text{ab}}(y,y'):=\|y-y'\|_1$ and the square loss, $\ell_{\text{sq}}(y,y'):=\|y-y'\|_2^2$ with two other methods: (i) k-Nearest-Neighbor (kNN): Given an $x$, predict a vector in $\mathcal{Y}$ that minimizes a locally estimated risk, i.e., $\min_{y\in\mathcal{Y}}\sum_{i\in\mathcal{N}(x)}\ell(y,y^{(i)})$, where $\mathcal{N}(x)$ is a set of $k$ nearest samples from $x$. (ii) Kernel Ridge Regression (KRR): First predict a vector $\hat{y}$ by KRR on individual outputs (disregarding the constraints), then project it on $\mathcal{Y}$ by minimizing the Euclidean norm, $\min_{y\in\mathcal{Y}}\|y-\hat{y}\|_2$.

We solve the inference problem for ECRM and kNN under $\ell_{\text{ab}}$ and $\ell_{\text{sq}}$ by a subgradient method and an interior point line-search filter method (using IPOPT, a software library by W{\"a}chter \& Biegler \yrcite{wachter06}), respectively. Both ECRM and KRR use the RBF kernel, and all parameters (including $k$ of kNN) are tuned using a separate validation set.
\begin{figure}[!tbp]
\centering
\begin{tikzpicture}
\tikzset{edge/.style = {->,> = latex'}}
\tikzstyle{every node} = [circle, draw=black, fill=white, scale=1.1, inner sep=0pt, minimum size=15pt]
\tikzstyle{dummy} = [draw=none, minimum size=10pt]
\node       (x1)  at (0.0,0.0)   {$s$};
\node       (x2)  at (1.5,1.0)   {};
\node       (x3)  at (1.5,-1.0)   {};
\node       (x4)  at (3.5,1.0)   {};
\node       (x5)  at (3.5,-1.0)    {};
\node       (x6)  at (5.0,0.0)    {$t$};
\node[dummy]       (b1)  at (-1.25,0.0)   {\small $1$};
\node[dummy]       (b6)  at (6.25,0.0)   {\small $1$};
\draw [thick,->,>=stealth] (b1) -- (x1){};
\draw [thick,->,>=stealth] (x1) -- (x2){};
\draw [thick,->,>=stealth] (x1) -- (x3){};
\draw [thick,->,>=stealth] (x2) -- (x3){};
\draw [thick,->,>=stealth] (x2) -- (x4){};
\draw [thick,->,>=stealth] (x3) -- (x5){};
\draw [thick,->,>=stealth] (x2) -- (x5){};
\draw [thick,->,>=stealth] (x3) -- (x4){};
\draw [thick,->,>=stealth] (x4) -- (x5){};
\draw [thick,->,>=stealth] (x4) -- (x6){};
\draw [thick,->,>=stealth] (x5) -- (x6){};
\draw [thick,->,>=stealth] (x6) -- (b6){};
\end{tikzpicture}
\caption{Network used for simulating flows, with source node $s$ and sink node $t$. Here $b_s=1,b_t=-1,$ and $b_j=0,\forall j\not\in\{s,t\}$.}
\label{fig:network}
\end{figure}
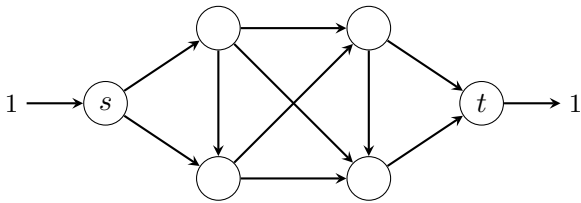

Table \ref{tbl:flow_emprisk} shows the performance of each method when trained and evaluated on samples of 1000 each (averaged over 50 independent trials). ECRM achieves the smallest average risk under both losses, and by a significant margin under $\ell_{\text{ab}}$. The poor performance of KRR under $\ell_{\text{ab}}$ is not surprising given that KRR is based on the square loss. This performance gap with ECRM and KNN underscores the importance of adapting the loss function to the task at hand. Figure \ref{fig:flow_condrisk} shows the conditional risk $R(\hat{h}(x)|x)$ with respect to $\ell_{\text{ab}}$ at a fixed $x$, where $\hat{h}$ is a predictor trained using each method. ECRM closes in on the optimal conditional risk (i.e., $\min_{y\in\mathcal{Y}}R(y|x)$) at $10^{3.5}$ samples, whereas kNN and KRR exhibit significant optimality gaps even with around 10 times more samples. This shows that by minimizing the estimated conditional risk directly, ECRM can produce a prediction that is Bayes optimal.
\section{Conclusions and Future Work}
We have developed a framework for structured prediction based on estimated conditional risk minimization. Our approach treats the problem as one of learning a conditional risk function, which is then minimized by solving an inference problem to predict an output. In particular, we derived a nonparametric family of conditional risk estimators that is based on regularized least squares, and characterized its statistical and computational properties. Unlike existing methods that are based on convex loss relaxations, our approach enables, in some cases, efficient training and inference without having to introduce a surrogate loss. Empirical evaluations on two distinct tasks, one with continuous and another with discrete outputs, demonstrated its effectiveness in adapting to a variety of loss functions.

For future work, we may consider other families of conditional risk estimators and explore a broader class of applications with input-dependent structured outputs. Finally, we note that the applicability of exact linear programming relaxation in our method is not limited to the examples we provided in this paper. For example, our exactness results for HMC can be extended to incorporate additional mutual exclusion constraints (e.g., $y_1=0 \lor y_2=0$), even if total unimodularity is not be satisfied.

\begin{table}[t]
  \caption{The average absolute ($\ell_{\text{ab}}$) and square ($\ell_{\text{sq}}$) losses of each method in the flow prediction task, averaged over 50 independent simulations with 1000 training and test samples in each trial. The standard deviation is shown next to the average.}
  \label{tbl:flow_emprisk}
\begin{center}
\begin{small}
  \begin{tabular}{@{}lcccccc@{}}
  \toprule
  \textsc{loss}    & {\textsc{ECRM}}  & {\textsc{KNN}} & {\textsc{KRR}} \\
  \midrule
  $\ell_{\text{ab}}$      & {\bf 1.6268}\;$\pm$\;0.0347       & 1.7125\;$\pm$\;0.0366       & 1.8673\;$\pm$\;0.0311 \\
  $\ell_{\text{sq}}$      & {\bf 0.6476}\;$\pm$\;0.0197       & 0.6859\;$\pm$\;0.0202       & 0.6495\;$\pm$\;0.0197 \\
  \bottomrule
  \end{tabular}
\end{small}
\end{center}
\end{table}
\begin{figure}
\centering     
\subfigure[Training Time (HMC)]{\label{fig:running_time}\includegraphics[width=40mm]{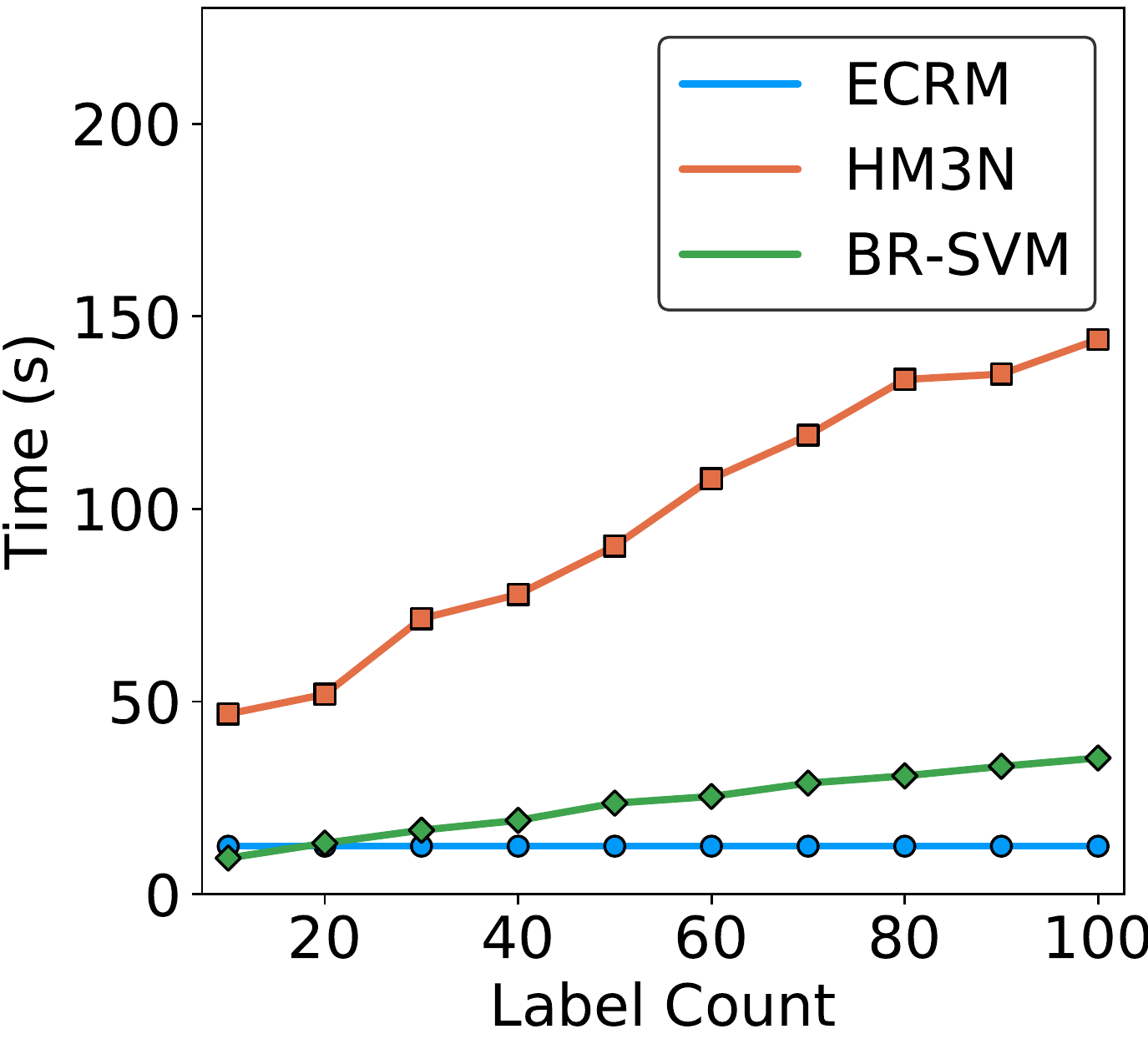}}\hfill
\subfigure[Conditional Risk (Flow)]{\label{fig:flow_condrisk}\includegraphics[width=40mm]{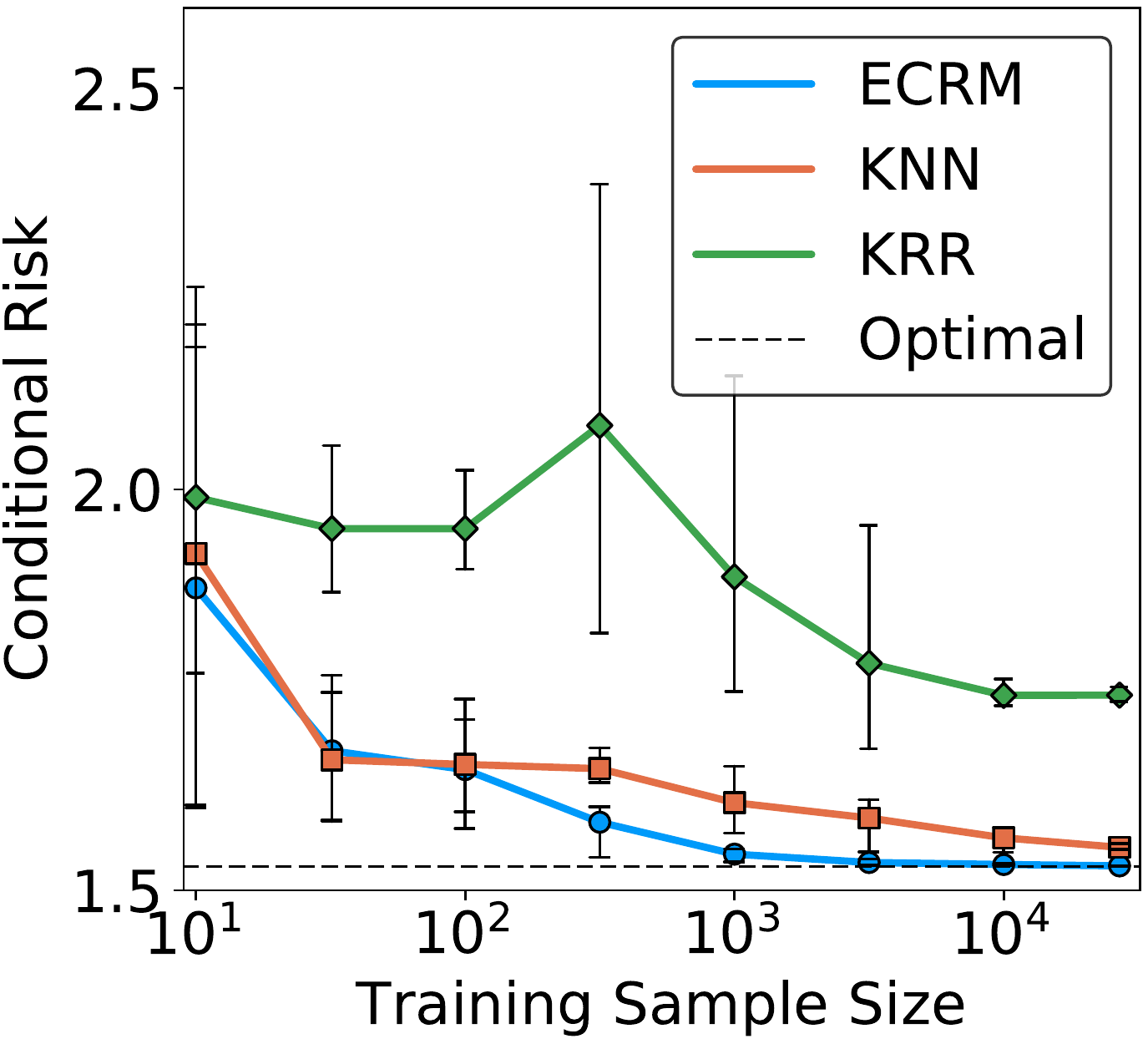}}
\hskip -02.in
\caption{(a) Training time of HMC methods on the {\sc reuters} data set (3000 samples) under hierarchies of varying sizes, obtained by truncating nodes in the original graph. (b) Conditional risks with respect to $\ell_{\text{ab}}$ at $x=0.75$ (element-wise) in the flow prediction task, under various methods and sample sizes. The plots show averages over 20 independent simulations, with the error bars indicating the 15th and 85th quantiles. The dashed line shows the optimal (Bayes) conditional risk.}
\end{figure}

 



\bibliography{references}
\bibliographystyle{icml2016}
\newpage
\phantom{Placeholder for supplements.}
\includepdf[pages={1-last}]{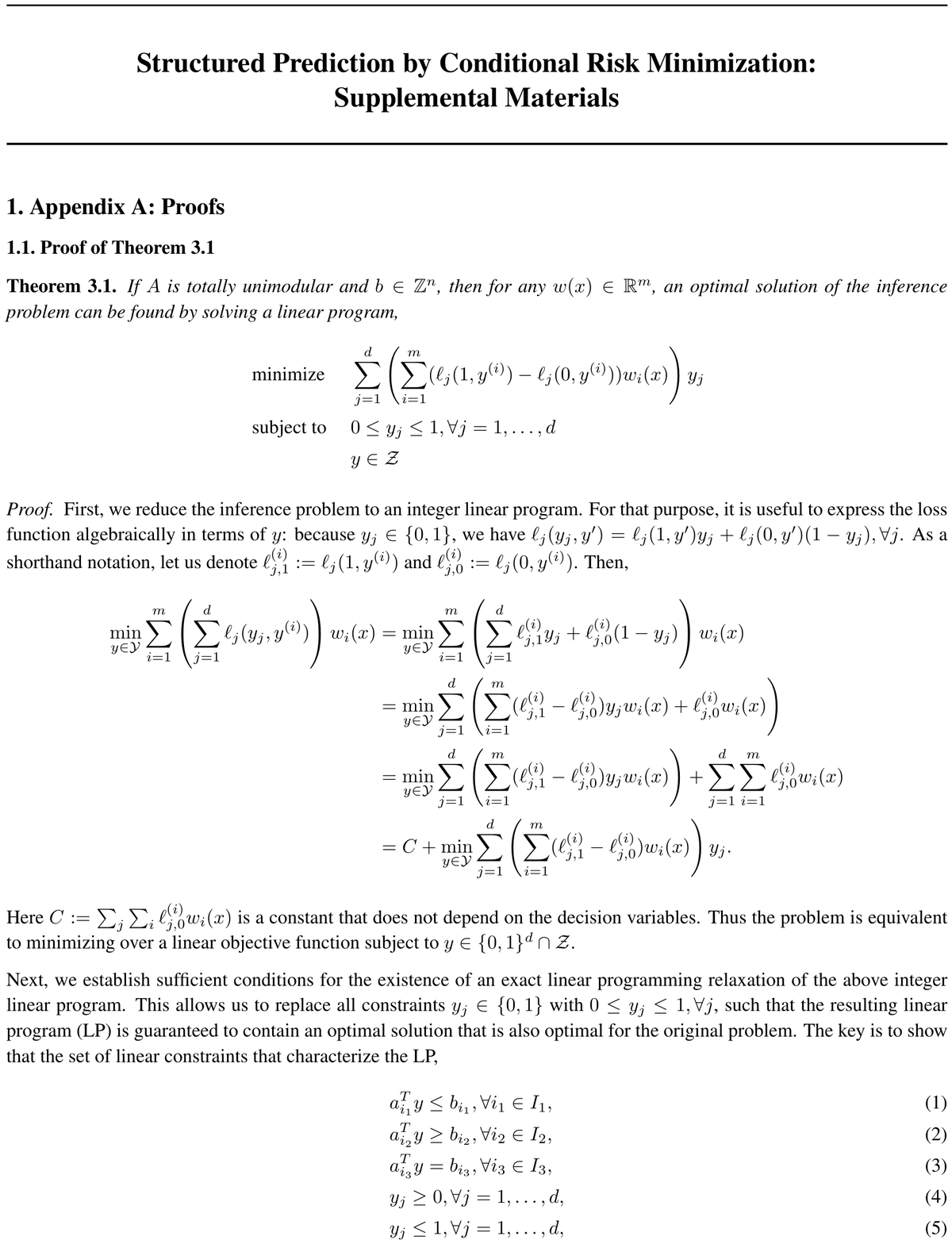}
\end{document}